\documentclass{article}

% if you need to pass options to natbib, use, e.g.:
% \PassOptionsToPackage{numbers, compress}{natbib}
% before loading nips_2018

% ready for submission
\usepackage[final]{nips_2018}

% to compile a preprint version, e.g., for submission to arXiv, add
% add the [preprint] option:
% \usepackage[preprint]{nips_2018}

% to compile a camera-ready version, add the [final] option, e.g.:
% \usepackage[final]{nips_2018}

% to avoid loading the natbib package, add option nonatbib:
% \usepackage[nonatbib]{nips_2018}

\usepackage[utf8]{inputenc} % allow utf-8 input
\usepackage[T1]{fontenc}    % use 8-bit T1 fonts
\usepackage{hyperref}       % hyperlinks
\usepackage{url}            % simple URL typesetting
\usepackage{booktabs}       % professional-quality tables
\usepackage{amsfonts}       % blackboard math symbols
\usepackage{nicefrac}       % compact symbols for 1/2, etc.
\usepackage{microtype}      % microtypography

% Better maths support & more symbols
\usepackage{mathtools,bbm,bm}
\usepackage{amsmath, amsthm, graphicx, url,algorithm2e}

\usepackage{xcolor}

\bibliographystyle{plainnat}

\title{Adaptation to Easy Data in Prediction with Limited Advice \\[5pt]
\small Full Version Including Appendices}

% The \author macro works with any number of authors. There are two
% commands used to separate the names and addresses of multiple
% authors: \And and \AND.
%
% Using \And between authors leaves it to LaTeX to determine where toq
% break the lines. Using \AND forces a line break at that point. So,
% if LaTeX puts 3 of 4 authors names on the first line, and the last
% on the second line, try using \AND instead of \And before the third
% author name.

%\author{
%  David S.~Hippocampus\thanks{Use footnote for providing further
%    information about author (webpage, alternative
%    address)---\emph{not} for acknowledging funding agencies.} \\
%  Department of Computer Science\\
%  Cranberry-Lemon University\\
%  Pittsburgh, PA 15213 \\
%  \texttt{hippo@cs.cranberry-lemon.edu} \\
%}

\author{
	Tobias Sommer Thune\\
	Department of Computer Science\\
	University of Copenhagen\\
	\texttt{tobias.thune@di.ku.dk}
	\And
	Yevgeny Seldin\\
	Department of Computer Science\\
	University of Copenhagen\\
	\texttt{seldin@di.ku.dk}
}
  %% examples of more authors
  %% \And
  %% Coauthor \\
  %% Affiliation \\
  %% Address \\
  %% \texttt{email} \\
  %% \AND
  %% Coauthor \\
  %% Affiliation \\
  %% Address \\
  %% \texttt{email} \\
  %% \And
  %% Coauthor \\
  %% Affiliation \\
  %% Address \\
  %% \texttt{email} \\
  %% \And
  %% Coauthor \\
  %% Affiliation \\
  %% Address \\
  %% \texttt{email} \\

% our own preamble: %
% preamble file, to be includes in various typesetting files.

\newcommand{\ds}{\displaystyle}

\newcommand{\NN}{\mathbb{N}}

\newcommand{\est}{ \widetilde{\Delta \ell}_{t}^{a}}

\newcommand{\esti}{ \widetilde{\Delta \ell}_{i}^{a}}
\newcommand{\estis}{ \widetilde{\Delta \ell}_{i}^{a^{\star}}}
\newcommand{\ests} { \widetilde{\Delta \ell}_{s}^{a}}
\DeclareMathOperator*{\E}{\mathbb{E}}
\DeclareMathOperator*{\PP}{\mathbb{P}}

\newtheorem{theorem}{Theorem}
\newtheorem{lemma}{Lemma}
\usepackage{amsmath, graphicx, url,algorithm2e}

\begin{document}
% \nipsfinalcopy is no longer used

\maketitle

\begin{abstract}
  We derive an online learning algorithm with improved regret guarantees for ``easy'' loss sequences. We consider two types of ``easiness'': (a) stochastic loss sequences and (b) adversarial loss sequences with small effective range of the losses. While a number of algorithms have been proposed for exploiting small effective range in the full information setting, \citet{GL16} have shown the impossibility of regret scaling with the effective range of the losses in the bandit setting. We show that just one additional observation per round is sufficient to circumvent the impossibility result. The proposed \emph{Second Order Difference Adjustments} (SODA) algorithm requires no prior knowledge of the effective range of the losses, $\varepsilon$, and achieves an $O(\varepsilon \sqrt{KT \ln K}) + \tilde{O}(\varepsilon K \sqrt[4]{T})$ expected regret guarantee, where $T$ is the time horizon and $K$ is the number of actions. The scaling with the effective loss range is achieved under significantly weaker assumptions than those made by \citet{CBS18} in an earlier attempt to circumvent the impossibility result. We also provide a regret lower bound of $\Omega(\varepsilon\sqrt{T K})$, which almost matches the upper bound. In addition, we show that in the stochastic setting SODA achieves an $O\left(\sum_{a:\Delta_a>0} \frac{K^3 \varepsilon^2}{\Delta_a}\right)$ pseudo-regret bound that holds simultaneously with the adversarial regret guarantee. In other words, SODA is safe against an unrestricted oblivious adversary and provides improved regret guarantees for at least two different types of ``easiness'' simultaneously.
\end{abstract}

%introduction

\section{Introduction}
Online learning algorithms with both worst-case regret guarantees and refined guarantees for ``easy'' loss sequences have come into research focus in recent years. In our work we consider \emph{prediction with limited advice} games \citep{SBCA14}, which are an interpolation between \emph{full information} games \citep{Vov90,LW94,CBL06} and games with \emph{limited} (a.k.a.\ \emph{bandit}) \emph{feedback} \citep{ACB+02,BCB12}.\footnote{There exists an orthogonal interpolation between full information and bandit games through the use of feedback graphs \cite{ACBGMM17}, which is different and incomparable with prediction with limited advice, see \cite{SBCA14} for a discussion.} In prediction with limited advice the learner faces $K$ unobserved sequences of losses $\{\ell_t^a\}_{t,a}$, where $a$ indexes the sequence number and $t$ indexes the elements within the $a$-th sequence. At each round $t$ of the game the learner picks a sequence $A_t \in \{1,\dots,K\}$ and suffers the loss $\ell_t^{A_t}$, which is then observed. After that, the learner is allowed to observe the losses of $M$ additional sequences in the same round $t$, where $0 \leq M \leq K-1$. For $M = K-1$ the setting is equivalent to a full information game and for $M = 0$ it becomes a bandit game.

For a practical motivation behind prediction with limited advice imagine that the loss sequences correspond to losses of $K$ different algorithms for solving some problem, or $K$ different parametrizations of one algorithm, or $K$ different experts. If we had the opportunity we would have executed all the algorithms or queried all the experts before making a prediction. This would correspond to a full information game. But in reality we may be constrained by time, computational power, or monetary budget. In such case we are forced to select algorithms or experts to query. Being able to query just one expert or algorithm per prediction round corresponds to a bandit game, but we may have time or money to get a bit more, even though not all of it. This is the setting modeled by prediction with limited advice.

Our goal is to derive an algorithm for prediction with limited advice that is robust in the worst case and provides improved regret guarantees in ``easy'' cases. There are multiple ways to define ``easiness'' of loss sequences. Among them, loss sequences generated by i.i.d.\ sources, like the classical stochastic bandit model \citep{Rob52,LR85,ACBF02}, and adversarial sequences with bounded effective range of the losses within each round \citep{CBMS07}. For the former a simple calculation shows that in the full information setting the basic Hedge algorithm \citep{Vov90,LW94} achieves an improved ``constant'' (independent of time horizon) pseudo-regret guarantee without sacrificing the worst-case guarantee. Much more work is required to achieve adaptation to this form of easiness in the bandit setting if we want to keep the adversarial regret guarantee simultaneously \citep{BS12,SS14,AC16,SL17,WL18,ZS18}.

An algorithm that adapts to the second form of easiness in the full information setting was first proposed by \citet{CBMS07} and a number of variations have followed \citep{GSE14,KE15,LS15,Win17}. However, a recent result by \citet{GL16} have shown that such adaptation is impossible in the bandit setting. \citet{CBS18} proposed a way to circumvent the impossibility result by either assuming that the ranges of the individual losses are provided to the algorithm in advance or assuming that the losses are smooth and an ``anchor'' loss of one additional arm is provided to the algorithm. The latter assumption has so far only lead to a substantial improvement when the ``anchor'' loss is always the smallest loss in the corresponding round.

We consider adaptation to both types of easiness in prediction with limited advice. We show that $M=1$ (just one additional observation per round) is sufficient to circumvent the impossibility result of \citet{GL16}. This assumption is weaker than the assumptions in \citet{CBS18}. We propose an algorithm, which achieves improved regret guarantees both when the effective loss range is small and when the losses are stochastic (generated i.i.d.). The algorithm is inspired by the BOA algorithm of \citet{Win17}, but instead of working with exponential weights of the cumulative losses and their second moment corrections it uses estimates of the loss differences. The algorithm achieves an $O(\varepsilon \sqrt{KT \ln K}) + \tilde{O}(\varepsilon K \sqrt[4]{T})$ expected regret guarantee with no prior knowledge of the effective loss range $\varepsilon$ or time horizon $T$. We also provide regret lower bound of $\Omega(\varepsilon\sqrt{KT})$, which matches the upper bound up to logarithmic terms and smaller order factors. Furthermore, we show that in the stochastic setting the algorithm achieves  an $ O\left(\sum_{a:\Delta_a>0} \frac{K^3 \varepsilon^2}{\Delta_a}\right)$ pseudo-regret guarantee. The improvement in the stochastic setting is achieved without compromising the adversarial regret guarantee.

The paper is structured in the following way. In Section~\ref{sec:setting} we lay out the problem setting. In Section~\ref{sec:algo} we present the algorithm and in Section~\ref{sec:results} the main results about the algorithm. Proofs of the main results are presented in Section~\ref{sec:proofs}.
\section{Problem Setting}\label{sec:setting}
We consider sequential games defined by $K$ infinite sequences of losses $\{\ell_1^a, \ell_2^a, \dots\}_{a \in \{1,\dots,K\}}$, where $\ell_t^a \in [0,1]$ for all $a$ and $t$. At each round $t \in \{1,2,\dots\}$ of the game the learner selects an action (a.k.a.\ ``arm'') $A_t \in [K] := \{1,\dots,K\}$ and then suffers and observes the corresponding loss $\ell_t^{A_{t}}$.  Additionally, the learner is allowed to choose a second arm, $B_{t}$, and observe $\ell_{t}^{B_{t}}$. The loss of the second arm, $\ell_t^{B_t}$, is not suffered by the learner. (This is analogous to the full information setting, where the losses of all arms $a \neq A_t$ are observed, but not suffered). It is assumed that $\ell_t^{B_t}$ is observed \emph{after} $A_t$ has been selected, but other relative timing of events within a round is unimportant for our analysis.

The performance of the learner up to round $T$ is measured by \emph{expected regret} defined as
\begin{align}
  \mathcal{R}_{T} := \E \left [\sum_{t=1}^{T} \ell_{t}^{A_{t}} \right ] - \min_{a \in [K]} \E \left [ \sum_{t=1}^{T} \ell_{t}^{a} \right ], \label{regret}
\end{align} 
where the expectation is taken with respect to potential randomization of the loss generation process and potential randomization of the algorithm. We note that in the adversarial setting the losses are considered deterministic and the second expectation can be omitted, whereas in the stochastic setting the definition coincides with the definition of pseudo-regret \citep{BCB12,SL17}. In some literature $\mathcal{R}_T$ is termed \emph{excess of cumulative predictive risk} \citep{Win17}. 

%We consider two different generation methods of the loss sequences $\ell_{t}^{a}$,  which is the first type of ``easiness'' considered.
Below we define adversarial and stochastic loss generation models and effective range of loss sequences.

\subsubsection*{Adversarial losses}
In the adversarial setting the loss sequences are selected arbitrarily by an adversary. We restrict ourselves to the \emph{oblivious} model, where the losses are fixed before the start of the game and do not depend on the actions of the learner.
%The name stems from the interpretation of this as playing against an adversary that sets the losses, trying to hide the optimal sequence from the learner. In this paper as well as most of the literature, we consider the loss sequences to be chosen in full before the game begins, called an oblivious and deterministic adversary. As such the second expectation in \eqref{regret} has no effect, and the two definitions of expected regret coincide. %Note that if the learner behaves deterministically based on the observed losses, this adversary can still design loss sequences that counter the strategy of the learner.

\subsubsection*{Stochastic losses}
In the stochastic setting the losses are drawn i.i.d., so that $\E[\ell_{t}^{a}] = \mu_{a}$ independently of $t$. Since we have a finite number of arms, there exists a best arm $a^{\star}$ (not necessarily unique) such that $\mu_{a^{\star}} \leq \mu_{a}$ for all $a$. We further define the suboptimality \emph{gaps} by
\begin{align*}
  \Delta_{a} := \mu_{a} - \mu_{a^{\star}} \geq 0.
\end{align*}
In the stochastic setting the expected regret can be rewritten as
\begin{align}
  \mathcal{R}_{T} = \sum_{a \in [K] : \Delta_{a} > 0} \Delta_{a} \E \left [ \sum_{t=1}^{T} \mathbbm{1}(A_{t} = a )\right ],\label{stocRegret}
\end{align}
where $\mathbbm{1}$ is the indicator function. 

\subsubsection*{Effective loss range}
For both the adversarial and stochastic losses, we define the \emph{effective loss range} as the smallest number $\varepsilon$, such that for all $t \in [T]$ and $a,a' \in [K]$:
\begin{align}
  \vert \ell_{t}^{a} - \ell_{t}^{a'} \vert \leq \varepsilon \quad \text{almost surely.}% \quad \forall t,a,a'.
\end{align}
%\begin{align}
%  \varepsilon := \inf \left \{ \epsilon\  \mmid\  \vert \ell_{t}^{a} - \ell_{t}^{a'} \vert \leq \epsilon\quad \text{a. s.}\quad  \forall t,a,a' \right \},
%\end{align}
%which is intuitively the maximal intra-round difference in losses between arms. 
Since we have assumed that $\ell_t^a \in [0,1]$, we have $\varepsilon \leq 1$, where $\varepsilon = 1$ corresponds to an unrestricted setting.

\section{Algorithm}
\label{sec:algo}

We introduce the \emph{Second Order Difference Adjustments} (\emph{SODA}) algorithm, summarized in Algorithm~\ref{soda}. SODA belongs to the general class of \emph{exponential weights} algorithms. The algorithm has two important distinctions from the common members of this class. First, it uses cumulative \emph{loss difference estimators} instead of cumulative loss estimators for the exponential weights updates. Instantaneous loss difference estimators at round $t$ are defined by 
%\begin{align}
%  \est = \left \{
%  \begin{matrix}
%  	0 & a = A_{t} \\
%	(K-1) \mathbbm{1}(a=B_{t}) \left(\ell_{t}^{B_{t}} - \ell_{t}^{A_{t}}\right) & \text{otherwise}
%  \end{matrix}
%  \right.. \label{estimator}
%\end{align}
\begin{align}
  \est = (K-1) \mathbbm{1}(B_{t}=a) \left(\ell_{t}^{B_{t}} - \ell_{t}^{A_{t}}\right). \label{estimator}
\end{align}
SODA samples the ``secondary'' action $B_t$ (the additional observation) uniformly from $K-1$ arms, all except $A_t$, and the $(K-1)$ term above corresponds to importance weighting with respect to the sampling of $B_t$. The loss difference estimators scale with the effective range of the losses and they can be positive and negative. Both of these properties are distinct from the traditional loss estimators. The second difference is that we are using a second order adjustment in the weighting inspired by \cite{Win17}. 
%YS: I am not sure about the interpretations
%, which provides an asymmetrical weighting for positive an negative differences. This can be interpreted as providing the additional uniform exploration needed in the gains game (OR SOMETHING ?!).
We define the cumulative loss difference estimator and its second moment by
\begin{align}
  D_{t}(a) := \sum_{s=1}^{t} \ests, \quad S_{t}(a) := \sum_{s=1}^{t} \left ( \ests \right)^{2}. \label{DS}
\end{align}
We then have the distribution $\bm{p_{t}}$ for selecting the primary action $A_t$ defined by
%The algorithm chooses the main actions $A_{t}$ from a sequence of distributions, where the probability of choosing arm $a$ in round $t$ is the $a$'th element of $\bm{p_{t}}$:
\begin{align}
  p_{t}^{a} = \frac{\exp \left ( - \eta_t D_{t-1} (a) - \eta_t^2 S_{t-1} (a) \right )}{\sum_{a=1}^K \exp \left ( - \eta_t D_{t-1}(a) - \eta_t^2 S_{t-1} (a) \right )}\label{p}, 
\end{align}
where $\eta_{t}$ is a learning rate scheme, defined as
%Algorithm \ref{soda} relies on a learning rate scheme $\eta_t$, which is used in the definition of the sampling distribution $\bm{p_t}$ in equation \eqref{p}. 
%The learning rate used in this paper is
\begin{align}
  \eta_{t} = \min \left \{ \sqrt{ \frac{\ln K}{\max_{a} S_{t-1}(a) + (K-1)^{2}}} , \frac{1}{2(K-1)} \right \}. \label{learningrate}
\end{align}
%where $S_{t-1}(a)$ is defined in \eqref{DS}. 
The learning rate satisfies $\eta_{t} \leq 1/(2\varepsilon(K-1))$ for all $t$, which is required for the subsequent analysis.

The algorithm is summarized below:

\noindent \rule{\linewidth}{1pt}
\hspace{-1cm}
\begin{algorithm}
    %\TitleOfAlgo{Second Order Difference Adjustments (SODA)}
    %\BlankLine
    %\SetKwInOut{Input}{input}
    %\Input{Learning rate scheme $\eta_{t}$ with $\eta_{t} \leq (2  \varepsilon (K-1))^{-1}$, see equation~\eqref{learningrate}}
    %\BlankLine
    %$\forall a:$ Set $p_{1}^{a} = 1/K$\;
    Initialize $\bm{p_1} \gets (1/K,\dots,1/K)$.
    
    \For{$t = 1,2,\dots$}{
    	Draw $A_{t}$ according to $\bm{p_{t}}$\;
    	
    	Draw $B_{t}$ uniformly at random from the remaining actions $[K] \setminus \{A_t\}$\;
    	
    	Observe $\ell_{t}^{A_{t}}, \ell_{t}^{B_{t}}$ and suffer $\ell_{t}^{A_{t}}$\;
    	
    	Construct $\est$ by equation \eqref{estimator}\;
    	
    	Update $D_{t}(a),S_{t}(a)$ by \eqref{DS}\;
			
			Define $\bm{p_{t+1}}$ by \eqref{p}\;
    }	
    \caption{Second Order Difference Adjustments (SODA)\label{soda}}
    %
    % The ruled options seems to be defined in the stylesheet, but somehow overwritten, which makes the algorithm look weird.
    % We assume this will be fixed in the event of publication. 
    %
\end{algorithm}\\
\hspace{-1cm}
\rule{\linewidth}{1pt}

%For known $\varepsilon$ we define
%\begin{align}
%  \eta_{t} = \min \left \{ \frac{1}{\varepsilon}\sqrt{ \frac{\ln K}{t(K-1)}}  ,  \frac{1}{2\varepsilon(K-1)}\right \} \label{learningrate}.
%\end{align}
%%Note that this learning rate does not depend on the time horizon $T$.
%
%For unknown $\varepsilon$ we first define the cumulative observed loss differences up to round $t$:
%\begin{align}
%  \hat{S}_{t-1} = \sum_{s=1}^{t-1} \left ( \ell_{s}^{B_{s}} - \ell_{s}^{A_{s}} \right )^{2} = \frac{1}{(K-1)^{2}} \sum_{a=1}^{K} S_{t-1}(a).\label{hatS}
%\end{align}
%$\hat{S}_{t-1}$ is known to the algorithm prior to selecting arms at round $t$ and satisfies $\hat{S}_{t} \leq \varepsilon^{2} t$. The adaptive learning rate is defined by
%\begin{align}
%  \eta_{t} = \min \left \{  \frac{1}{K-1} \sqrt{ \frac{\ln K}{ \hat{S}_{t-1} +1}}, \frac{1}{2(K-1)} \right \}. \label{learningrateadaptive}
%\end{align}

\section{Main Results}
\label{sec:results}

We are now ready to present the regret bounds for SODA. We start with regret upper and lower bounds in the adversarial regime and then show that the algorithm simultaneously achieves improved regret guarantee in the stochastic regime. 

\subsection{Regret Upper Bound in the Adversarial Regime}

First we provide an upper bound for the expected regret of SODA against oblivious adversaries that produce loss sequences with effective loss range bounded by $\varepsilon$. Note that this result does not depend on prior knowledge of the effective loss range $\varepsilon$ or time horizon $T$.

\begin{theorem}
\label{thm:adversarial}
The expected regret of \emph{SODA} against an oblivious adversary satisfies
%\begin{align*}
%  \mathcal{R}_{T} \leq  4 \varepsilon\sqrt{(K-1) \ln K}  \sqrt{T + (K-1)\sqrt{ \frac{T}{2}\left ( 8 + \ln \left ( \sqrt{ \frac{T}{2} } (K-1) \right )\right )}} + 4(K-1) \ln K.
%\end{align*}
%\begin{align*}
%  \mathcal{R}_{T} \leq  4 \varepsilon\sqrt{(K-1) \ln K}  \sqrt{T + (K-1)\sqrt{ 4T + \frac{T}{2}\ln \left ( \sqrt{ \frac{T}{2} } (K-1) \right )}} + 4(K-1) \ln K.
%\end{align*}
\begin{align*}
  \mathcal{R}_{T} \leq  4 \varepsilon\sqrt{(K-1) \ln K}  \sqrt{T + (K-1)\sqrt{T} \left ( 2 + \sqrt{ \ln \left ( \sqrt{ T} (K-1) \right )/2 } \right )} + 4(K-1) \ln K.
\end{align*}
%when $\max_{a} S_{T-1}(a) \geq (1 + 4 \ln K)(K-1)^{2}$. Otherwise $\mathcal{R}_{T} \leq 4(K-1) \ln K$.
\end{theorem}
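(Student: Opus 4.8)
The plan is to reduce the expected regret to a pathwise, second-order (BOA-style) regret bound for the exponential-weights update run on the estimated loss differences, to establish that bound for every realization, and then to take expectations while controlling the random second-moment terms by concentration.

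\emph{Reduction.} First I would use that $\est$ is unbiased for the instantaneous loss difference. Since $B_t$ is uniform on $[K]\setminus\{A_t\}$, conditioning on the history $\mathcal{F}_{t-1}$ (and then on $A_t$) gives $\E[\est \mid \mathcal{F}_{t-1}] = \ell_t^a - \sum_{a'} p_t^{a'}\ell_t^{a'}$ for every $a$, the case $a=A_t$ being consistent since both sides vanish. Consequently $\E[\sum_a p_t^a \est \mid \mathcal{F}_{t-1}] = 0$, whereas for the best fixed arm $a^{\star}$ one has $\E[-\widetilde{\Delta \ell}_t^{a^{\star}}\mid\mathcal{F}_{t-1}] = \E[\ell_t^{A_t}\mid\mathcal{F}_{t-1}] - \ell_t^{a^{\star}}$. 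Summing over $t$ shows $\mathcal{R}_T = \E[\hat R_T]$, where $\hat R_T := \sum_{t}\sum_a p_t^a \est - D_T(a^{\star})$ is the pathwise regret of the update against $a^{\star}$ on the signed, bounded losses $\est$ (which are nonzero only for $a=B_t$). It therefore suffices to bound $\E[\hat R_T]$.

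\emph{Pathwise second-order bound.} I would introduce the potential $\Phi_t(\eta) := \frac{1}{\eta}\ln\big(\frac{1}{K}\sum_a \exp(-\eta D_t(a) - \eta^2 S_t(a))\big)$, so that $\Phi_0(\eta_1)=0$. The key per-round step uses the inequality $e^{-u-u^2}\le 1-u$, valid for $|u|\le\frac12$; the learning-rate design guarantees $\eta_t\le\frac{1}{2(K-1)}\le\frac{1}{2\varepsilon(K-1)}$ and $|\est|\le(K-1)\varepsilon$, hence $|\eta_t\est|\le\frac12$, so the inequality applies with $u=\eta_t\est$. At a fixed rate this turns the one-step change into $\Phi_t(\eta_t)-\Phi_{t-1}(\eta_t)\le -\sum_a p_t^a\est$, the crucial point being that the second-order correction $\eta_t^2 S$ inside the weights exactly absorbs the per-round variance, leaving no residual variance term. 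Telescoping while bookkeeping the changing rate and lower-bounding the final potential by its $a^{\star}$ summand gives $\hat R_T \le \frac{\ln K}{\eta_{T+1}} + \eta_{T+1} S_T(a^{\star}) + \sum_{s\le T}\big(\Phi_s(\eta_{s+1})-\Phi_s(\eta_s)\big)$.

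The hard part will be the rate-change correction $\sum_{s}\big(\Phi_s(\eta_{s+1})-\Phi_s(\eta_s)\big)$, which appears precisely because $\eta_t$ is data-dependent and decreasing; I would bound it using that $\eta_t$ is non-increasing together with the specific form of $\eta_t$ (the $+(K-1)^2$ offset and the $\frac{1}{2(K-1)}$ cap), showing these corrections are dominated by the final-round quantities and contribute only the additive $O((K-1)\ln K)$. Using $S_T(a^{\star})\le\max_a S_T(a)$ and $\eta_{T+1}\approx\sqrt{\ln K/(\max_a S_T(a)+(K-1)^2)}$, the first two terms combine to $2\sqrt{\ln K\,(\max_a S_T(a)+(K-1)^2)}$, so pathwise $\hat R_T$ is controlled by $\sqrt{\ln K\,(\max_a S_T(a)+(K-1)^2)}$ up to the constants in the statement. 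Finally I would take expectations: by concavity of $\sqrt{\cdot}$ and Jensen, $\E\big[\sqrt{\max_a S_T(a)+(K-1)^2}\big]\le\sqrt{\E[\max_a S_T(a)]+(K-1)^2}$. A direct computation gives $\E[(\est)^2\mid\mathcal{F}_{t-1}]\le (K-1)\varepsilon^2$, so $\max_a\E[S_T(a)]\le (K-1)\varepsilon^2 T$; the remaining gap $\E[\max_a S_T(a)]-\max_a\E[S_T(a)]$ is a maximal deviation of the $K$ martingales obtained by centering $S_t(a)$ by its predictable compensator, whose increments are bounded by $(K-1)^2\varepsilon^2$, which I would control by a Hoeffding--Azuma maximal inequality and a union bound over arms. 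This produces the correction $(K-1)\sqrt T\,(2+\sqrt{\ln(\sqrt T(K-1))/2})$ inside the square root, and collecting terms yields the claimed bound.
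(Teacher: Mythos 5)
Your overall skeleton — the unbiasedness reduction, the potential $\Phi_t(\eta)$, the per-round inequality from $e^{z-z^2}\le 1+z$, the final Jensen step, and a concentration argument for $\E\left[\max_a S_T(a)\right]$ — coincides with the paper's proof (your Azuma-on-centered-$S_t(a)$ variant for the last step is a harmless substitute for the paper's stochastic domination of the counts $\sum_s \mathbbm{1}[B_s=a]$ by i.i.d.\ Bernoulli variables). However, there is a genuine gap in your treatment of the rate-change correction $\sum_t\left(\Phi_t(\eta_{t+1})-\Phi_t(\eta_t)\right)$, which you dismiss as an additive $O((K-1)\ln K)$ term "dominated by the final-round quantities." That claim is false, and it is precisely the point where this analysis departs from the standard anytime EXP3 argument. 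With signed loss-difference estimators and the second-order term $\eta^2 S_t(a)$ in the exponent, the potential is no longer monotone in $\eta$: one computes
\begin{align*}
\Phi_t'(\eta) \;=\; \frac{1}{\eta^{2}}\,\mathrm{KL}\!\left(q_t^{\eta}\,\middle\|\,\bm{1/K}\right) \;-\; \E_{a\sim q_t^{\eta}}\!\left[S_t(a)\right],
\end{align*}
so decreasing the learning rate can \emph{increase} the potential, and all one gets per round is $\Phi_t(\eta_{t+1})-\Phi_t(\eta_t)\le \max_a S_t(a)\,(\eta_t-\eta_{t+1})$. This bound is tight: if all arms have identical $D_t(a)$ and $S_t(a)$, the KL term vanishes and the inequality is an equality. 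Summing these increments with $\eta_t \approx \sqrt{\ln K/\max_a S_{t-1}(a)}$ gives a quantity of order $\sqrt{\ln K \cdot \max_a S_{T}(a)}$ — in expectation of order $\varepsilon\sqrt{(K-1)T\ln K}$, i.e., the \emph{same} order as your "first two terms," not a lower-order additive constant. The paper handles this with a dedicated Riemann-sum lemma (Lemma~\ref{lemma:adversarial:technical}), which yields the contribution $2\sqrt{\ln K}\sqrt{\max_a S_{T-1}(a)+(K-1)^2}$; this is exactly why the final constant is $4$ (namely $1+1+2$) rather than the $2$ your accounting would produce.

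A secondary bookkeeping error follows from the same confusion: the additive $4(K-1)\ln K$ in the theorem statement does not come from the rate-change correction at all, but from the separate case in which the learning rate is clipped at $\frac{1}{2(K-1)}$ (equivalently, $\max_a S_{T-1}(a)+(K-1)^2 \le 4(K-1)^2\ln K$), where the learning rate is constant, the potential differences vanish, and each of the first two terms is at most $2(K-1)\ln K$. Your proposal needs both repairs — the KL-based derivative bound plus the Riemann-sum lemma for the varying-rate case, and the explicit clipped-rate case — to reach the stated bound.
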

A proof of this theorem is provided in Section~\ref{proof:adversarial}.\footnote{It is straightforward to extended the analysis to time-varying ranges, $\varepsilon_t: \vert \ell_t^a - \ell_t^{a'}\vert \leq \varepsilon_t$ for all $a,a'$ a.s., which leads to an $O \left ( \sqrt{ \sum_{t=1}^{T}( \varepsilon_{t}^2 ) K \ln K}\right) + \tilde{O}\left (K \sqrt[4]{ \sum_{t=1}^{T} \varepsilon_t^2 }\right )$ regret bound	. For the sake of clarity we restrict the presentation to a constant $\varepsilon$.}
% $O \left ( \sqrt{ \sum_{t=1}^{T}( \varepsilon_{t}^2 ) K \ln K}\right) + \tilde{O}\left (K \sqrt[4]{ \sum_{t=1}^{T} \varepsilon_t^2 }\right )$
%$ 4 \varepsilon\sqrt{(K-1) \ln K}  \sqrt{ \sum_{t=1}^{T}( \varepsilon_{t}^2 ) + (K-1)\sqrt{ \sum_{t=1}^{T}( \varepsilon_{t}^2 )} \left ( 2 + \sqrt{ \ln \left ( \sqrt{  \sum_{t=1}^{T}( \varepsilon_{t}^2 )} (K-1) \right )/2 } \right )} + 4(K-1) \ln K$
The upper bound scales as $O(\varepsilon \sqrt{KT \ln K}) + \tilde{O}(\varepsilon K \sqrt[4]{T})$, which nearly matches the lower bound provided below. %Note that the algorithm requires no knowledge of the effective loss range $\varepsilon$ or time horizon $T$.

\subsection{Regret Lower Bound in the Adversarial Regime}
We show that in the worst case the regret must scale linearly with the effective loss range $\varepsilon$.

\begin{theorem} \label{thm:lowerbound}
In prediction with limited advice with $M=1$ (one additional observation per round or, equivalently, two observations per round in total), for loss sequences with effective loss range $\varepsilon$, we have for $T \geq 3K/32$:
\begin{align*}
  \inf \sup \mathcal{R}_{T} \geq 0.02 \varepsilon \sqrt{KT},
\end{align*}
where the infimum is with respect to the choices of the algorithm and the supremum is over all oblivious loss sequences with effective loss range bounded by $\varepsilon$.
\end{theorem}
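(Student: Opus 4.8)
The plan is to prove this minimax lower bound via the standard information-theoretic (``needle in a haystack'') construction, adapted to account for the two observations available per round. First I would reduce the worst-case adversarial statement to an average-case one: since the definition of $\mathcal{R}_T$ already includes an expectation over loss randomization, it suffices to exhibit a prior over oblivious loss sequences whose expected regret is at least $0.02\,\varepsilon\sqrt{KT}$, because $\sup$ over deterministic sequences dominates any average over a prior. The prior I would use picks a hidden ``good'' arm $i$ uniformly from $[K]$ and then generates the losses i.i.d.; once the randomness is realized the sequence is fixed, hence a legitimate oblivious sequence.

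For the construction I would let every loss take values in $\{\tfrac{1}{2}-\tfrac{\varepsilon}{2},\ \tfrac{1}{2}+\tfrac{\varepsilon}{2}\}$, so that losses lie in $[0,1]$ and $|\ell_t^a-\ell_t^{a'}|\le\varepsilon$ holds surely, matching the prescribed effective range. In environment $E_i$ the good arm $i$ takes the larger value with probability $\tfrac{1}{2}-\delta$ and every other arm with probability $\tfrac{1}{2}$, giving a suboptimality gap $\varepsilon\delta$ for all $a\neq i$; I would also fix a reference environment $E_0$ in which all arms are symmetric with probability $\tfrac{1}{2}$. Writing $N_i^A=\sum_{t=1}^T\mathbbm{1}(A_t=i)$ for the number of rounds the good arm is the primary action, the regret in $E_i$, call it $\mathcal{R}_T^{(i)}$, equals exactly $\varepsilon\delta\,(T-\E_i[N_i^A])$, so the proof reduces to showing that $\E_i[N_i^A]$ cannot be large on average over $i$.

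The core step is the divergence decomposition combined with Pinsker's inequality. The feature specific to the limited-advice setting is that each round reveals two losses, $\ell_t^{A_t}$ and $\ell_t^{B_t}$, so along any algorithm's interaction the KL divergence between $E_0$ and $E_i$ decomposes as $n_i\cdot\mathrm{kl}(\tfrac12,\tfrac12-\delta)$, where $n_i=\E_0[\sum_t(\mathbbm{1}(A_t=i)+\mathbbm{1}(B_t=i))]$ counts all observations of arm $i$; this holds no matter how the learner adaptively chooses $B_t$, and the budget constraint $\sum_i n_i=2T$ is the only place where $M=1$ enters. Combining $\E_i[N_i^A]\le\E_0[N_i^A]+T\sqrt{\tfrac12\, n_i\,\mathrm{kl}(\tfrac12,\tfrac12-\delta)}$ with the identity $\sum_i\E_0[N_i^A]=T$, the quadratic bound $\mathrm{kl}(\tfrac12,\tfrac12-\delta)\le c\,\delta^2$, and Cauchy--Schwarz $\sum_i\sqrt{n_i}\le\sqrt{2KT}$, averaging over $i$ yields $\tfrac1K\sum_i\mathcal{R}_T^{(i)}\ge\varepsilon\delta T\bigl(1-\tfrac1K-\delta\sqrt{c}\,\sqrt{T/K}\bigr)$.

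Finally I would optimize over $\delta$, choosing $\delta$ proportional to $\sqrt{K/T}$ so that the parenthesized factor is bounded below by a constant, which produces the claimed $\varepsilon\sqrt{KT}$ scaling. The condition $T\ge 3K/32$ is precisely what guarantees that the optimizing $\delta$ stays small enough for both $\tfrac12-\delta\ge 0$ (valid probabilities) and the quadratic estimate of $\mathrm{kl}$ to remain in force. The main obstacle I anticipate is bookkeeping rather than conceptual: extracting the explicit numerical constant $0.02$ requires carefully tracking the factor of $2$ from the extra observation through Cauchy--Schwarz, using a tight estimate of $\mathrm{kl}(\tfrac12,\tfrac12-\delta)$ valid over the entire admissible range of $\delta$, and balancing the $\tfrac1K$ correction, all under the weakest admissible horizon condition $T\ge 3K/32$.
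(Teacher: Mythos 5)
Your proposal is correct; the two-point construction, the divergence decomposition with the $2T$ observation budget, Pinsker, and Cauchy--Schwarz all go through as you describe, and the bookkeeping does close: writing $\delta=\beta\sqrt{K/T}$ turns your final display into $\varepsilon\beta\sqrt{KT}\left(\tfrac12-\beta\sqrt{c}\right)$ (using $1-1/K\ge\tfrac12$), and taking for instance $\beta=0.12$ with $c=3.2$, which bounds $\mathrm{kl}\left(\tfrac12,\tfrac12-\delta\right)=-\tfrac12\ln\left(1-4\delta^{2}\right)\le c\delta^{2}$ for all $\delta\le 0.4$ --- a range guaranteed by $T\ge 3K/32$, since then $\delta\le\beta\sqrt{32/3}\le 0.4$ --- yields a constant above $0.03$, comfortably clearing $0.02$. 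The paper, however, takes a much shorter route: it does not re-derive the information-theoretic argument at all, but reduces to Theorem~2 of Seldin et al.\ (2014), which already gives $\inf\sup\mathcal{R}_T\ge 0.03\sqrt{KT/M'}$ for prediction with limited advice with $M'=M+1$ observations per round whenever $T\ge\frac{3}{16}\frac{K}{M'}$, and which is itself proved by exactly the Bernoulli needle-in-a-haystack argument you outline. The only new ingredient in the paper's proof is the rescaling observation: replacing Bernoulli losses on $\{0,1\}$ by losses on $\{0,\varepsilon\}$ multiplies every gap, hence the regret, by $\varepsilon$, while leaving all KL divergences --- and therefore the lower bounds on the expected numbers of plays of suboptimal arms --- unchanged; plugging in $M'=2$ then gives the constant $0.03/\sqrt{2}\approx 0.021\ge 0.02$ and the condition $T\ge 3K/32$. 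What the paper's reduction buys is brevity and inherited constants; what your self-contained derivation buys is an explicit account of exactly where the extra observation enters (only through the budget $\sum_i n_i=2T$) and how the range $\varepsilon$ propagates, details the citation hides, at the price of the kl/Pinsker constant-tracking you correctly identified as the main labor.
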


The theorem is proven by adaptation of the $\Omega(\sqrt{KT})$ lower bound by \citet{SBCA14} for prediction with limited advice with unrestricted losses in $[0,1]$ and one extra observation. We provide it in Appendix~\ref{proof:lowerbound}. %The above bound can be compared to the $\Omega(\sqrt{KT})$ lower bound for prediction with limited advice with unrestricted losses in $[0,1]$ and one extra observation \citep{SBCA14}. 
Note that the upper bound in Theorem~\ref{thm:adversarial} matches the lower bound up to logarithmic terms and lower order additive factors. In particular, changing the selection strategy for the second arm, $B_t$, from uniform to anything more sophisticated is not expected to yield significant benefits in the adversarial regime.

\subsection{Regret Upper Bound in the Stochastic Regime}
Finally, we show that SODA enjoys constant expected regret in the stochastic regime. This is achieved without sacrificing the adversarial regret guarantee.

\begin{theorem}
\label{thm:stochastic}
The expected regret of \emph{SODA} applied to stochastic loss sequences with gaps $\Delta_a %\leq \varepsilon
$ satisfies
\begin{align}
  \mathcal{R}_{T} \leq \sum_{a:\Delta_{a} > 0} \left [ \left ( \frac{16K^3}{\ln K} + 16K^2 \right ) \frac{\varepsilon^{2}}{\Delta_{a}} + 4K^2 + \frac{\Delta_{a}}{K} \right ].
\end{align}
\end{theorem}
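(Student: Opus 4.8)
The plan is to bound the expected number of pulls of each suboptimal arm directly. Starting from \eqref{stocRegret} and using that $A_t$ is drawn from $\bm{p_t}$, which is measurable with respect to the history $\mathcal{F}_{t-1}$ up to round $t-1$, I would write $\mathcal{R}_T = \sum_{a:\Delta_a>0}\Delta_a\sum_{t=1}^T \E[p_t^a]$, so it suffices to bound $\sum_t \E[p_t^a]$ for a fixed suboptimal $a$. Two elementary computations drive everything. First, since $\est$ and $\widetilde{\Delta\ell}_t^{a^\star}$ are never simultaneously nonzero and the losses are independent of $\mathcal{F}_{t-1}$, a short calculation gives $\E[\est \mid \mathcal{F}_{t-1}] = \mu_a - \sum_{a'} p_t^{a'}\mu_{a'}$, hence the drift $\E[\est - \widetilde{\Delta\ell}_t^{a^\star}\mid\mathcal{F}_{t-1}] = \Delta_a$. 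Second, $\E[(\est)^2\mid\mathcal{F}_{t-1}]\leq(K-1)\varepsilon^2$ and, deterministically, $\sum_a(\ests)^2\leq(K-1)^2\varepsilon^2$, so $\max_a S_{t-1}(a)\leq (t-1)(K-1)^2\varepsilon^2$. This last bound turns \eqref{learningrate} into a deterministic lower bound $\eta_t\geq\underline{\eta}_t := \tfrac{1}{K-1}\sqrt{\ln K/((t-1)\varepsilon^2+1)}$ (capped by $\tfrac{1}{2(K-1)}$). Finally, lower-bounding the normaliser in \eqref{p} by its $a^\star$ term and using $\eta_t^2\max_a S_{t-1}(a)\leq\ln K$ yields the master inequality $p_t^a\leq K\exp(-\eta_t(D_{t-1}(a)-D_{t-1}(a^\star)))$.

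Next I would split on the good event $G_t=\{D_{t-1}(a)-D_{t-1}(a^\star)\geq (t-1)\Delta_a/2\}$, which is $\mathcal{F}_{t-1}$-measurable, writing $\E[p_t^a]\leq \E[p_t^a\mathbbm{1}(G_t)]+\PP(\bar G_t)$. On $G_t$ the exponent is nonnegative, so $\eta_t$ may be replaced by $\underline{\eta}_t$, giving the deterministic bound $p_t^a\leq K\exp(-\underline{\eta}_t(t-1)\Delta_a/2)$. Summing this over $t$ in the two learning-rate regimes: while the cap $\underline{\eta}_t=\tfrac{1}{2(K-1)}$ is active the summand decays geometrically, contributing $O(K^2/\Delta_a)$ to $\sum_t\E[p_t^a]$, i.e. the $4K^2$ regret term after multiplying by $\Delta_a$; in the uncapped regime the exponent behaves like $\tfrac{\Delta_a\sqrt{\ln K}}{2(K-1)\varepsilon}\sqrt{t}$, so comparison with $\int K e^{-c\sqrt{u}}\,du = 2K/c^2$ produces the $O(K^3\varepsilon^2/(\Delta_a^2\ln K))$ contribution, i.e. the $\tfrac{16K^3}{\ln K}\tfrac{\varepsilon^2}{\Delta_a}$ regret term.

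For the bad event I would apply a martingale (Azuma--Hoeffding) bound to $M_{t-1}:=D_{t-1}(a)-D_{t-1}(a^\star)-(t-1)\Delta_a$, whose increments $\ests-\widetilde{\Delta\ell}_s^{a^\star}-\Delta_a$ are bounded in absolute value by $K\varepsilon$ (using $|\ell_s^a-\ell_s^{a'}|\leq\varepsilon$, the single-support property, and $\Delta_a\leq\varepsilon$). With threshold $x=(t-1)\Delta_a/2$ this gives $\PP(\bar G_t)\leq\exp(-(t-1)\Delta_a^2/(8K^2\varepsilon^2))$, whose geometric sum over $t$ is $O(K^2\varepsilon^2/\Delta_a^2)$, i.e. the $16K^2\varepsilon^2/\Delta_a$ regret term. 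Treating the initial round separately ($p_1^a=1/K$) accounts for the remaining $\Delta_a/K$. Collecting the four contributions and summing over suboptimal arms gives the claim.

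The main obstacle is the coupling between the random learning rate $\eta_t$ and the random cumulative estimators $D_{t-1}$ and $S_{t-1}$ inside $p_t^a$: one cannot simply take expectations because $\eta_t$ is itself a function of the same sample path. The clean resolution is exactly the pair of deterministic controls above --- the worst-case bound on $\max_a S_{t-1}(a)$ that delivers a sample-path lower bound $\underline{\eta}_t$, together with restricting to $G_t$ so that the nonnegative exponent lets us replace $\eta_t$ by $\underline{\eta}_t$ without tracking its fluctuations. A secondary delicate point is calibrating the concentration inequality and the threshold $(t-1)\Delta_a/2$ so that the three $\varepsilon^2/\Delta_a$-type contributions assemble with the stated $K$-powers; note in particular that using the worst-case (rather than in-expectation) bound on $\max_a S_{t-1}(a)$ is what inflates the leading term to $K^3$, and that $\Delta_a\leq\varepsilon$ keeps the concentration in its variance/sub-Gaussian regime.
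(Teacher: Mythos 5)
Your proposal is correct, and all of its key ingredients coincide with the paper's: the master inequality $p_t^a \le K\exp\left(-\eta_t\sum_{i=1}^{t-1}X_i\right)$ (the paper's \eqref{boundOnpta}), the deterministic lower bound $\underline{\eta}_t=\frac{1}{K-1}\sqrt{\ln K/((t-1)\varepsilon^2+1)}$ obtained from $\max_a S_{t-1}(a)\le(t-1)(K-1)^2\varepsilon^2$ (the paper's $\bar{\eta}_t$), the per-round drift $\E\left[X_i\mid\mathcal{F}_{i-1}\right]=\Delta_a$, Azuma's inequality at the half-drift threshold, and the summation bounds $\sum_t e^{-c\sqrt{t}}\le 2/c^2$ and $\sum_t e^{-ct}\le 1/c$ of Lemma~\ref{stochastic:technical}. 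Where you genuinely differ is the decomposition used to decouple the random learning rate from the random exponent. The paper thresholds in value space, $\E[p_t^a]\le\sigma+\PP\left\{p_t^a>\sigma\right\}$, which forces a further case split on the sign of $\sum_i X_i$ (replacing $\eta_t$ by its cap $\frac{1}{2(K-1)}$ on one event and by $\bar{\eta}_t$ on the other) before $\sigma$ can be tuned as in \eqref{sigma} so that both deviation thresholds equal $-E/2$ and Azuma applies. You instead split in path space on the good event $G_t=\left\{\sum_{i<t}X_i\ge(t-1)\Delta_a/2\right\}$: on $G_t$ the exponent is nonnegative, so the lower bound $\eta_t\ge\underline{\eta}_t$ alone yields the deterministic bound $K\exp\left(-\underline{\eta}_t(t-1)\Delta_a/2\right)$ --- which is exactly the paper's tuned $\sigma$ --- while on $\bar{G}_t$ the learning rate never appears and Azuma bounds $\PP(\bar{G}_t)$, which is exactly the deviation probability the paper ends up controlling. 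The two arguments therefore produce identical quantities term by term, but yours needs a single event split and a single Azuma application in place of the paper's threshold tuning plus sign split, so it is a cleaner organization of the same proof; your treatment of the capped regime (folding the cap into $\underline{\eta}_t$ rather than handling it as a separate case) is likewise equivalent. Your constant bookkeeping also checks out at the same level of rigor as the paper's closing step: the Azuma sum gives $8K^2\varepsilon^2/\Delta_a\le 16K^2\varepsilon^2/\Delta_a$, the uncapped regime with $(t-1)\varepsilon^2\ge 1$ gives $16K(K-1)^2\varepsilon^2/(\Delta_a\ln K)\le\frac{16K^3}{\ln K}\frac{\varepsilon^2}{\Delta_a}$, the capped regime gives $4K(K-1)\le 4K^2$, and $p_1^a=1/K$ gives $\Delta_a/K$; the residual uncapped sub-case $(t-1)\varepsilon^2\le 1$ contributes an extra $O(K^2/\sqrt{\ln K})$, which the paper's proof likewise absorbs without explicit comment.
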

A brief sketch of a proof of this theorem is given in Section~\ref{sketch:stochastic}, with the complete proof provided in Appendix~\ref{proof:stochastic}.

Note that $\varepsilon$ is the effective range of realizations of the losses, whereas the gaps $\Delta_a$ are based on the expected losses. Naturally, $\Delta_a \leq \varepsilon$. For example, if the losses are Bernoulli then the range is $\varepsilon = 1$, but the gaps are based on the distances between the biases of the Bernoulli variables. When the losses are not $\{0,1\}$, but confined to a smaller range $\varepsilon$, Theorem~\ref{thm:stochastic} yields a tighter regret bound. The scaling of the regret bound in $K$ is suboptimal and it is currently unknown whether it could be improved without compromising the worst-case guarantee. Perhaps changing the selection strategy for $B_t$ could help here. We leave this improvement for future work.

To summarize, SODA achieves adversarial regret guarantee that scales with the effective loss range and almost matches the lower bound and simultaneously has improved regret guarantee in the stochastic regime.

% Proofs:
% analysis of the anytime version
% First analysis chapter

\section{Proofs} \label{sec:proofs}
This section contains the proof of Theorem~\ref{thm:adversarial} and a proof sketch for Theorem~\ref{thm:stochastic}. The proof of Theorem~\ref{thm:lowerbound} is provided in Appendix~\ref{proof:lowerbound}.

\subsection{Proof of Theorem~\ref{thm:adversarial}}\label{proof:adversarial}
The proof of the theorem is prefaced by two lemmas, but first we show some properties of the loss difference estimators. We use $\E_{B_{t}}$ to denote expectation with respect to selection of $B_t$ conditioned on all random outcomes prior to this selection. For oblivious adversaries, the expected cumulative loss difference estimators are equal to the negative expect regret against the corresponding arm $a$:
\begin{align*}
  \mathbb{E} \left [ \sum_{t=1}^{T} \est \right ] = \E \left [ \sum_{t=1}^{T} \E_{B_{t}} \left [ \est \right] \right]
	%&= \E \left [ \sum_{t=1}^{T} \sum_{b \neq A_{t}} \frac{1}{K-1} (K-1) \mathbbm{1}(a=b) \left(\ell_{t}^{b} - \ell_{t}^{A_{t}}\right) \right ] \\
  	= \E \left [ \sum_{t=1}^{T} \left(\ell_{t}^{a} - \ell_{t}^{A_{t}}\right) \right ] %\\
	=  \sum_{t=1}^{T} \ell_{t}^{a} - \E \left [ \sum_{t=1}^{T} \ell_{t}^{A_{t}}\right ] 
	=: - \mathcal{R}_{T}^{a},
\end{align*}
where we have used the fact that $\est$ is an unbiased estimate of $\ell_{t}^{a} - \ell_{t}^{A_{t}}$ due to importance weighting with respect to the choice of $B_t$. Similarly, we have
\begin{align}
  \E \left [ \sum_{t=1}^{T} \left(\est \right)^{2} \right] = (K-1)\ \E \left [\sum_{t=1}^{T} \left ( \ell_{t}^{a} - \ell_{t}^{A_{t}} \right )^{2} \right ]. \label{squaredExpectation}
\end{align}
Similar to the analysis of the anytime version of EXP3 in \cite{BCB12}, which builds on \cite{ACB+02}, we consider upper and lower bounds on the expectation of the incremental update. This is captured by the following lemma:

\begin{lemma} \label{lemma:adversarial:main} With a learning rate scheme $\eta_t$ for $t =1,2,\dots$, where $\eta_t \leq 1/2\varepsilon(K-1)$, \emph{SODA} fulfills:
\begin{align}
  -\sum_{t=1}^{T} \est \leq \frac{\ln K}{\eta_{T}} + \eta_{T} \sum_{t=1}^{T} \left ( \est \right)^{2} 
  					- \sum_{t=1}^{T} \E_{a \sim p_{t}} \left [\est \right] + \sum_{t} \left ( \Phi_{t}(\eta_{t+1}) - \Phi_{t}(\eta_{t}) \right)\label{lemmaResult}
\end{align}
for all $a$, where we define the \emph{potential}
\begin{align}
   \Phi_{t}(\eta) := \frac{1}{\eta} \ln \left ( \frac{1}{K}  \sum_{a=1}^{K} \exp \left (- \eta D_{t}(a) - \eta^{2} S_{t}(a) \right) \right). \label{potentialDef}
\end{align}
\end{lemma}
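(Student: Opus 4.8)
The plan is to run the standard exponential-weights potential argument in the spirit of the anytime EXP3 analysis referenced before the lemma, but carrying the second-order (BOA-style) correction and the signed loss-difference estimators, and tracking the time-varying learning rate through a telescoping of the potential $\Phi_t$. First I would record the boundedness of the estimator: since $\est = (K-1)\mathbbm{1}(B_t=a)(\ell_t^{B_t}-\ell_t^{A_t})$ and $|\ell_t^{B_t}-\ell_t^{A_t}|\le\varepsilon$, we have $|\est|\le(K-1)\varepsilon$, so the hypothesis $\eta_t\le 1/(2\varepsilon(K-1))$ forces $|\eta_t\est|\le 1/2$. This two-sided bound is exactly what is needed because, unlike ordinary loss estimators, $\est$ may be negative.

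The heart of the argument is a one-step bound on the potential at a fixed learning rate. I would first observe that the sampling distribution $\bm{p_t}$ in \eqref{p} is precisely the Gibbs distribution built from $\Phi_{t-1}$ at learning rate $\eta_t$; combined with $D_t(a)=D_{t-1}(a)+\est$ and $S_t(a)=S_{t-1}(a)+\left(\est\right)^{2}$, this yields the identity
\[
  \Phi_t(\eta_t)-\Phi_{t-1}(\eta_t) = \frac{1}{\eta_t}\ln \E_{a\sim p_t}\left[\exp\left(-\eta_t\est-\eta_t^{2}\left(\est\right)^{2}\right)\right].
\]
I would then apply the elementary inequality $e^{-x-x^{2}}\le 1-x$, valid for $|x|\le 1/2$ (which I would verify; the negativity of the estimators is why a two-sided rather than one-sided bound is required), with $x=\eta_t\est$, followed by $\ln(1+u)\le u$, to obtain
\[
  \Phi_t(\eta_t)-\Phi_{t-1}(\eta_t) \le \frac{1}{\eta_t}\ln\left(1-\eta_t\,\E_{a\sim p_t}[\est]\right) \le -\E_{a\sim p_t}[\est].
\]

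Next I would telescope. Decomposing
\[
  \Phi_T(\eta_T)-\Phi_0(\eta_1) = \sum_{t=1}^{T}\left(\Phi_t(\eta_t)-\Phi_{t-1}(\eta_t)\right) + \sum_{t}\left(\Phi_t(\eta_{t+1})-\Phi_t(\eta_t)\right),
\]
the first sum is controlled by the one-step bound above, while the second sum collects exactly the learning-rate-change increments appearing in \eqref{lemmaResult}. Since $D_0(a)=S_0(a)=0$, the initial potential vanishes, $\Phi_0(\eta_1)=\tfrac{1}{\eta_1}\ln 1 = 0$. For the left-hand side I would lower-bound $\Phi_T(\eta_T)$ by retaining only the summand of the target arm $a$,
\[
  \Phi_T(\eta_T) \ge -\frac{\ln K}{\eta_T} - D_T(a) - \eta_T S_T(a),
\]
with $D_T(a)=\sum_{t=1}^{T}\est$ and $S_T(a)=\sum_{t=1}^{T}\left(\est\right)^{2}$. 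Combining the upper bound on the telescope with this lower bound and rearranging for $-\sum_t\est$ produces \eqref{lemmaResult} exactly.

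The main obstacle is the one-step potential bound: it hinges on the two-sided estimate $e^{-x-x^{2}}\le 1-x$ on $|x|\le 1/2$, together with the recognition that $\bm{p_t}$ coincides with the Gibbs weights of $\Phi_{t-1}$ at $\eta_t$, which is what lets the correction term $\eta_t^{2}\left(\est\right)^{2}$ absorb the quadratic contribution. The remaining steps — the telescoping, the evaluation $\Phi_0(\eta_1)=0$, and the single-arm lower bound on $\Phi_T(\eta_T)$ — are routine bookkeeping, the only care being to isolate the learning-rate-change increments so that they appear unprocessed as $\sum_t\left(\Phi_t(\eta_{t+1})-\Phi_t(\eta_t)\right)$ in the statement.
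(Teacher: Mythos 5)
Your proposal is correct and follows essentially the same route as the paper's proof: the same one-step potential identity via the Gibbs form of $\bm{p_t}$, the same inequality $e^{z-z^{2}}\le 1+z$ (you state it with the opposite sign convention, on the slightly smaller but sufficient range $|x|\le 1/2$) followed by $\ln(1+u)\le u$, the same telescoping that isolates the increments $\Phi_t(\eta_{t+1})-\Phi_t(\eta_t)$, the evaluation $\Phi_0(\eta_1)=0$, and the single-arm lower bound on $\Phi_T(\eta_T)$. No gaps; the rearrangement you describe yields \eqref{lemmaResult} exactly as in the paper.
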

Note that unlike in the analysis of EXP3, here the learning rates $\eta_t$ do not have to be non-increasing. A proof of this lemma is based on modification of standard arguments and is found in Appendix~\ref{proof:adversarial:main}.

The second lemma is a technical one and is proven in Appendix~\ref{proof:adversarial:technical}.
\begin{lemma}
\label{lemma:adversarial:technical}
Let $\sigma_{t}$ with $t \in \NN$ be an increasing positive sequence with bounded differences such that $\sigma_{t} - \sigma_{t-1} \leq c$ for a finite constant $c$. Let further $\sigma_{0} = 0$. Then
\begin{align*}
  \sum_{t=1}^{T} \sigma_{t} \left ( \frac{1}{\sqrt{\sigma_{t-1} + c}} - \frac{1}{\sqrt{\sigma_{t} + c}} \right ) \leq 2 \sqrt{\sigma_{T-1} + c}.
\end{align*}
\end{lemma}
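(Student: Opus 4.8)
The plan is to reduce the weighted telescoping sum to a genuine telescope by exploiting the bounded-difference hypothesis twice. First I would substitute $x_t := \sigma_t + c$, so that $x_0 = c$, the sequence $(x_t)$ is increasing, and the hypothesis $\sigma_t - \sigma_{t-1} \le c$ becomes $x_t - x_{t-1} \le c \le x_{t-1}$, the last inequality holding because $x_{t-1} = \sigma_{t-1} + c \ge c$. This yields the crucial \emph{doubling} property $x_t \le 2x_{t-1}$, which is the single structural fact driving the whole argument.

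Next, since each factor $\tfrac{1}{\sqrt{x_{t-1}}} - \tfrac{1}{\sqrt{x_t}}$ is nonnegative and $\sigma_t = x_t - c \le x_t$, I would first replace $\sigma_t$ by $x_t$ in the weight:
\[
  \sum_{t=1}^{T} \sigma_t\left(\frac{1}{\sqrt{x_{t-1}}} - \frac{1}{\sqrt{x_t}}\right) \le \sum_{t=1}^{T} x_t\left(\frac{1}{\sqrt{x_{t-1}}} - \frac{1}{\sqrt{x_t}}\right).
\]
Each summand then factors cleanly, and the doubling property $x_t/x_{t-1} \le 2$ controls the residual ratio:
\[
  x_t\left(\frac{1}{\sqrt{x_{t-1}}} - \frac{1}{\sqrt{x_t}}\right) = \sqrt{\frac{x_t}{x_{t-1}}}\,\bigl(\sqrt{x_t} - \sqrt{x_{t-1}}\bigr) \le \sqrt{2}\,\bigl(\sqrt{x_t} - \sqrt{x_{t-1}}\bigr).
\]

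Summing the telescoping right-hand side gives $\sqrt{2}\,(\sqrt{x_T} - \sqrt{x_0}) \le \sqrt{2}\,\sqrt{x_T}$. Finally I would invoke the doubling property once more at the top index, $x_T \le 2x_{T-1}$, to convert $\sqrt{2}\,\sqrt{x_T} \le \sqrt{2}\cdot\sqrt{2}\,\sqrt{x_{T-1}} = 2\sqrt{x_{T-1}} = 2\sqrt{\sigma_{T-1}+c}$, which is exactly the claimed bound. The delicate point is obtaining the sharp constant $2$: because the weight $x_t$ multiplies a difference evaluated at the \emph{smaller} denominator $\sqrt{x_{t-1}}$, the usual increment-over-cumulative-square-root estimate is not directly available, and a careless bound would produce a constant $\sqrt{2}+1$ instead. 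It is the bounded-difference hypothesis, entering as $x_t \le 2x_{t-1}$ and applied once inside each summand and once globally to pass from index $T$ to $T-1$, that supplies precisely the two factors of $\sqrt{2}$ needed to tighten the constant to $2$.
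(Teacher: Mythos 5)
Your proof is correct, but it follows a genuinely different route from the paper's. The paper uses the bounded-difference hypothesis once, at the very start, to replace the weight $\sigma_t$ by $\sigma_{t-1}+c$; after re-indexing, the sum collapses to $\sum_{t=1}^{T-1} \frac{\sigma_t - \sigma_{t-1}}{\sqrt{\sigma_t+c}}$ plus boundary terms, and this residual sum is bounded as a lower Riemann sum of $x \mapsto 1/\sqrt{x+c}$, so the constant $2$ emerges from the antiderivative $2\sqrt{x+c}$. You instead make the weaker (and hypothesis-free) replacement $\sigma_t \le \sigma_t + c = x_t$, factor each summand exactly as $\sqrt{x_t/x_{t-1}}\,(\sqrt{x_t}-\sqrt{x_{t-1}})$, and inject the hypothesis through the doubling property $x_t \le 2x_{t-1}$ — once inside each summand to get the factor $\sqrt{2}$, and once at the end to pass from $\sqrt{x_T}$ to $\sqrt{x_{T-1}}$. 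All your steps check out: the identity $x_t\bigl(\tfrac{1}{\sqrt{x_{t-1}}}-\tfrac{1}{\sqrt{x_t}}\bigr)=\sqrt{x_t/x_{t-1}}\,\bigl(\sqrt{x_t}-\sqrt{x_{t-1}}\bigr)$ is exact, the doubling property follows from $c \le x_{t-1}$ (valid since $\sigma_{t-1}\ge 0$), and the telescoping gives $\sqrt{2}\,\sqrt{x_T} \le 2\sqrt{x_{T-1}}$. What each approach buys: yours is purely discrete and self-contained (no integral comparison needed), and it localizes the use of the hypothesis so it is clear exactly where bounded differences matter; the paper's summation-by-parts-plus-Riemann-sum argument is the more standard template and generalizes readily to other decreasing integrands (e.g.\ other powers of $\sigma_t + c$), where your exact factorization trick is specific to the square root.
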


%\begin{proof}
%By the boundedness we have:
%\begin{align*}
%  \sum_{t=1}^{T} \sigma_{t} \left ( \frac{1}{\sqrt{\sigma_{t-1} + c}} - \frac{1}{\sqrt{\sigma_{t} + c}} \right ) &\leq \sum_{t=1}^{T}  (\sigma_{t-1} + c) \left ( \frac{1}{\sqrt{\sigma_{t-1} + c}} - \frac{1}{\sqrt{\sigma_{t} + c}} \right ) \\
%  &= \sum_{t=0}^{T-1} \frac{\sigma_{t} + c}{\sqrt{\sigma_{t} + c}}  - \sum_{t=1}^{T} \frac{\sigma_{t-1} + c}{\sqrt{\sigma_{t} + c}} \\
%      &= \sum_{t=1}^{T-1} \frac{\sigma_{t} - \sigma_{t-1}}{\sqrt{\sigma_{t} + c}} + \frac{\sigma_{0} + c}{\sqrt{\sigma_{0} + c}} -  \frac{\sigma_{T-1} + c}{\sqrt{\sigma_{T} + c}}. 
%\end{align*}
%Here the second term is $\sqrt{c}$ and the third is negative and can thus be disregarded in the upper bound. The first term is a lower Riemann sum of $x \mapsto 1/\sqrt{x + c}$, giving us:
%\begin{align*}
%  \sum_{t=1}^{T} \sigma_{t} \left ( \frac{1}{\sqrt{\sigma_{t-1} + c}} - \frac{1}{\sqrt{\sigma_{t} + c}} \right ) &\leq \sqrt{c} + \int_{\sigma_{1}}^{\sigma_{T-1}} \frac{1}{\sqrt{x+c}} \ dx \\
%  &=  \sqrt{c} + 2 \sqrt{x + c}  \ \bigg \vert_{\sigma_{1}}^{\sigma_{T-1}} \\
%  &\leq 2 \sqrt{\sigma_{T-1} + c},
%\end{align*}
%where the final inequality uses $2\sqrt{\sigma_{1} + c} > \sqrt{c}$.
%\end{proof}

\paragraph{Proof of Theorem~\ref{thm:adversarial}}
We apply Lemma~\ref{lemma:adversarial:main}, which leads to the following inequality for any learning rate scheme $\eta_t$ for $t =1,2,\dots$, where $\eta_t \leq 1/2\varepsilon(K-1)$:
\begin{align}
  -\sum_{t=1}^{T}\est
  				\leq \underbrace{\frac{\ln K}{\eta_{T}} \vphantom{\eta_{T} \sum_{t=1}^{T} \left ( \est \right)^{2}}}_{\mathclap{\text{1\textsuperscript{st}}}}
  					+ \underbrace{\eta_{T} \sum_{t=1}^{T} \left ( \est \right)^{2}}_{\mathclap{\text{2\textsuperscript{nd}} }}
  					- \underbrace{\sum_{t=1}^{T} \E_{a \sim p_{t}} \left [\est \right]}_{\mathclap{\text{3\textsuperscript{rd}} }}
					+ \underbrace{\sum_{t=1}^{T} \left ( \Phi_{t}(\eta_{t+1}) - \Phi_{t}(\eta_{t}) \right)}_{\mathclap{\text{4\textsuperscript{th}}}}.\label{lemmaMain}
\end{align}

Note that in expectation, the left hand side of \eqref{lemmaMain} is the regret against arm $a$. We are thus interested in bounding the expectation of the terms on the right hand side, where we note that the third term vanishes in expectation. We first consider the case where $\eta_{t} = \sqrt{\ln K / ( \max_{a} S_{t}(a) + (K-1)^{2})}$, postponing the initial value for now.

The first term becomes:
\begin{align}
  \frac{\ln K}{\eta_{T}} = \sqrt{\ln K} \sqrt{\max_{a} S_{T-1}(a) + (K-1)^{2}}. \label{1stterm}
\end{align}

The second term becomes:
\begin{align}
  \eta_{T} S_{T}(a) = \sqrt{\ln K} \frac{S_{T}(a)}{\sqrt{\max_{a} S_{T-1}(a) + (K-1)^{2}}} 
  	\leq \sqrt{\ln K} \sqrt{\max_{a} S_{T-1}(a) + (K-1)^{2}}, \label{2ndterm}
\end{align}
where we use that $S_{t}(a) \leq S_{t-1}(a) + (K-1)^{2}$ for all $t$ by design.

Finally, for the fourth term in equation~\eqref{lemmaMain}, we need to consider the potential differences. Unlike in the anytime analysis of EXP3, where this term is negative \citep{BCB12}, in our case it turns to be related to the second moment of the loss difference estimators. We let
\begin{align}
  q_{t}^{\eta} = \frac{\exp \left ( - \eta D_{t} (a) - \eta^2 S_{t} (a) \right )}{\sum_{a=1}^K \exp \left ( - \eta D_{t}(a) - \eta^2 S_{t} (a) \right )}
\end{align}
denote the exponential update using the loss estimators up to $t$, but with a free learning rate $\eta$. We further suppress some indices for readability, such that $D_a = D_{t}(a)$ and $S_a = S_{t}(a)$ in the following. We have
\begin{align*}
  \Phi_{t}'(\eta) &= - \frac{1}{\eta^{2}} \ln \left ( \frac{1}{K} \sum_{a} \exp \left ( - \eta D_a - \eta^{2} S_a \right ) \right )
  				+\frac{1}{\eta} \frac{\sum_{a} \exp \left ( - \eta D_a - \eta^{2} S_a \right ) \cdot \left (-D_a - 2 \eta S_a \right) }
								{ \sum_{a} \exp \left ( - \eta D_a - \eta^{2} S_a \right )} \\[10pt]
	&=\frac{\ds \sum_{a} \left ( \exp \left ( - \eta D_a - \eta^{2} S_a \right ) \cdot \left ( - \eta D_a - 2\eta^{2} S_a  - \ln \left ( \frac{1}{K} \sum_{a} \exp \left ( - \eta D_a - \eta^{2} S_a \right ) \right ) \right ) \right )} {\eta^{2} \sum_{a} \exp \left ( - \eta D_a - \eta^{2} S_a \right )}.
\end{align*}
By using $ - \eta D_a - 2\eta^{2} S_a = \ln\left( \exp(  - \eta D_a - \eta^{2} S_a) \exp(-\eta^{2}S_a) \right )$ the above becomes
\begin{align}
  \Phi_{t}'(\eta) =  \frac{1}{\eta^{2}} \E_{a \sim q_{t}^{\eta}} \left [ \ln \left ( \frac{q_{t}^{\eta}(a) }{1/K} \exp(-\eta^{2} S_a) \right ) \right ] 
  = \frac{1}{\eta^{2}} \text{KL} \left ( q_{t}^{\eta}\|\bm{1/K} \right) - \! \E_{a \sim q_{t}^{\eta}} \left [ S_{t}(a) \right ], \label{phiDerivative}
\end{align}
where we have used that $\bm{1/K}$ is the pmf.\ of the uniform distribution over $K$ arms.
Since the KL-divergence is always positive, we can rewrite the potential differences as
\begin{align*}
  \Phi_{t}(\eta_{t+1}) - \Phi_{t} (\eta_t) \ =\ -\! \int_{\eta_{t+1}}^{\eta_{t}} \Phi_{t}' (\eta) d\eta \ \leq\ \int_{\eta_{t+1}}^{\eta_{t}} \E_{a \sim q_{t}^{\eta}} \left [ S_{t}(a) \right ] d\eta
  	\ \leq\ \int_{\eta_{t+1}}^{\eta_{t}} \max_{a} S_{t}(a) d \eta \\
	= \sqrt{\ln K} \max_{a} S_{t}(a) \left ( \frac{1}{\sqrt{\ds\max_{a} S_{t-1}(a) + (K-1)^{2}}} - \frac{1}{\sqrt{\ds\max_{a} S_{t}(a) + (K-1)^{2}}} \right ).
\end{align*}
By Lemma~\ref{lemma:adversarial:technical} we then have
\begin{align}
  \sum_{t=1}^{T} \Phi_{t}(\eta_{t+1}) - \Phi_{t}(\eta_{t}) &\leq 2 \sqrt{ \ln K}\sqrt{ \max_{a} S_{T-1}(a) + (K-1)^{2} }. \label{4thterm}
\end{align}
Collecting the terms \eqref{1stterm}, \eqref{2ndterm} and \eqref{4thterm} and noting that these bounds hold for all $a$,
%\begin{align}
%  -D_{T}(a) \leq 4 \sqrt{\ln K} \sqrt{ \max_{a} S_{T-1} (a) + (K-1)^{2} }.
%\end{align}
%Since this holds for all $a$, 
by taking expectations and using Jensen's inequality we get
\begin{align}
  \mathcal{R}_{T} &\leq \E \left [ 4 \sqrt{\ln K} \sqrt{ \max_{a} S_{T-1} (a) + (K-1)^{2} } \right ] \nonumber \\
  &\leq 4 \sqrt{\ln K} \sqrt{\E \left [ \max_{a} S_{T-1} (a) \right ] + (K-1)^{2}}. \label{intermediateRegret}
\end{align}
The remainder of the proof is to bound this inner expectation:
\begin{align*}
  \E \left [ \max_{a} S_{T-1} (a) \right ] \leq (K-1)^{2} \varepsilon^{2} \E \left [\max_{a}\sum_{t=1}^{T-1} \mathbbm{1} [ B_{t}=a ] \right ].
\end{align*}
%We note that in each round $B_{t}$ can take $K-1$ values with equal probability out of the $K$ arms. In other words, for each arm there might be rounds with zero probability. We can then upper bound this expectation by instead considering the same $K-1$ arms in every round with $B_{t}$ being uniformly distributed over these. This makes the variables $B_{t}$ independent. 

Let $Z_{t}^{a} = \sum_{s=1}^{t} \mathbbm{1} [ B_{s}=a ]$ and note %$\E [Z_{T-1}^{a} ]$ = $ \frac{T-1}{K-1}$ and 
that $Z_{T-1}^{a} \leq T-1$. We now consider a partioning of the probability for a cutoff $\alpha > 0$:
\begin{align*}
  \E [ \max_{a} Z_{T-1}^{a} ]  &\leq \alpha \PP \left \{ \max_{a} Z_{T-1}^{a} \leq \alpha \right \} + (T-1) \PP \left \{ \max_{a} Z_{T-1}^{a} > \alpha \right \} \\
  %&\leq \alpha +  (T-1) \PP \left\{ \max_{a} Z_{T-1}^{a} > \alpha \right \} \\
  &\leq \alpha + (T-1)K \PP \left \{Z_{T-1}^{a} > \alpha \right \},
\end{align*}
using a union bound for the final inequality. To continue we need to address the fact that the $B_t$'s are not independent. We can however note that $\PP\{B_t = a\} \leq (K-1)^{-1}$ for all $t$ and $a$. By letting $x_{t}^{a}$ be Bernoulli with parameter $(K-1)^{-1}$ and $X_{T}^{a} = \sum_{t=1}^{T} x_{t}^{a}$ we then get
\begin{align}
  \PP \left \{ Z_{T-1}^{a} > \alpha \right \} \leq \PP \left \{ X_{T-1}^{a} > \alpha \right \}.
\end{align}
In the upper bound we can thus substitute $X_{T-1}^{a}$ for $Z_{T-1}^{a}$ and exploit the fact that the $x_{t}^{a}$'s are independent by construction. Note further that $\E [ X_{T-1}^{a} ] = \frac{T-1}{K-1}$, so by choosing $\alpha = \frac{T-1}{K-1} + \delta$ for $\delta > 0$, we obtain by Hoeffding's inequality:
\begin{align*}
  \E [ \max_{a} Z_{T-1}^{a} ] &\leq \frac{T-1}{K-1} + \delta + (T-1)K \PP \left \{X_{T-1}^{a} - \frac{T-1}{K-1} > \delta \right \} \\
  &\leq \frac{T-1}{K-1} + \delta + (T-1)K  \exp \left ( - \frac{2\delta^{2}}{T-1} \right ).
\end{align*}
We now choose
$
  \delta = \sqrt{ \frac{T}{2} \ln \left ( \sqrt{T} (K-1) \right ) },
$
which gives us
\begin{align*}
   \E [ \max_{a} Z_{T-1}^{a} ] &\leq  \frac{T-1}{K-1} + \sqrt{ \frac{T}{2} \ln \left ( \sqrt{ T} (K-1) \right ) } + 2\sqrt { T }.
\end{align*}
Inserting this in \eqref{intermediateRegret} gives us the desired bound.%, using $\sqrt{a+b} \leq \sqrt{a} + \sqrt{b}$ for positive $a,b$.

For the case where the learning rate at $T$ is instead given by $1/2(K-1)$ implying $4 (K-1)^{2} \ln K \geq \max_{a} S_{T-1}(a) + (K-1)^{2}$, the first term is $ \frac{\ln K }{\eta_{T}} = 2 (K-1) \ln K,$ and the second term is
\begin{align*}
    \eta_{T} S_{T}(a)  &= \frac{1}{2(K-1)} S_{T}(a) \leq \frac{S_{T-1}(a) + (K-1)^{2}}{2(K-1)}
    \leq \frac{4(K-1)^{2} \ln K}{2(K-1)}
    \leq 2(K-1) \ln K.
\end{align*}
Since the learning rate is constant the potential differences vanish, completing the proof.$\hfill \square$

% Proofs adaptive:
%\input{adaptive}

% Proof sketch stochastic:
% stochastic proof sketch
\subsection{Proof sketch of Theorem~\ref{thm:stochastic}} \label{sketch:stochastic}
Here we present the key ideas used to prove Theorem~\ref{thm:stochastic}. The complete proof is provided in Appendix~\ref{proof:stochastic}.

Recall that the expected regret in the stochastic setting is given by \eqref{stocRegret}, where $\E [\mathbbm{1}(A_t = a) ] = \E [p_t^a]$. Thus, we need to bound $\E [\sum_t p_t^a]$. The first step is to bound this as
\begin{align}
  \E \left [p_{t}^{a} \right ] \leq \sigma + \PP \left \{ p_{t}^{a} > \sigma \right \} \leq \sigma + \PP \left \{K e^{ - \eta_t \sum_{i=1}^{t-1} X_{i}} > \sigma \right \} \label{decomposition}
\end{align}
for a positive threshold $\sigma$, where we show that $p_{t}^{a} \leq Ke^{-\eta_t \sum_{i=1}^{t-1} X_{i}}$ for $X_{i} := \esti- \estis $. This approach is motivated by the fact that $\E_{B_i} [ \esti- \estis] = \Delta_{a}$, where the expectation is with respect to selection of $B_{i}$ and the loss generation, conditioned on all prior randomness.

The next step is to tune $\sigma \propto \exp ( \sum \E_i [X_i ] )$, which allows us to bound the second term using Azuma's inequality and balance the two terms. Finally, this bound is summed over $t$ using a technical lemma for the limit of this sum.
%The next step is to choose $\sigma = \exp ( \sum_{i=1}^{t-1} \E_{i} [X_{i}] + \delta )$ for a free parameter $\delta$. Since $X_{i} - \E_{i}[X_{i}]$ is a martingale difference sequence, we can bound the probability in \eqref{decomposition} by Azuma's inequality, giving us the bound
%\begin{align*}
%  \E \left [p_{t}^{a} \right ] \leq \exp \left ( \sum_{i=1}^{t-1} \E_{i} [X_{i}] + \delta \right ) + \exp \left ( - \frac{2 \delta^{2}}{\sum_{i=1}^{t-1} c_{i}^{2}} \right ),
%\end{align*}
%where $c_{i}$ is the range of $X_{i}$.
%The final step is tuning $\delta$, balancing the two terms based on the range and expectation of the $_x{i}$'s, and summing them over $t$, and then substituting the result into \eqref{stocRegret}.

% Conclusion:
\section{Discussion}
\label{sec:discussion}

We have presented the SODA algorithm for prediction with limited advice with two observations per round (the ``primary'' observation of the loss of the action that was played and one additional observation). We have shown that the algorithm adapts to two types of simplicity of loss sequences simultaneously: (a) it provides improved regret guarantees for adversarial sequences with bounded effective range of the losses and (b) for stochastic loss sequences. In both cases the regret scales linearly with the effective range and the knowledge of the range is not required. In the adversarial case we achieve  $O(\varepsilon \sqrt{KT \ln K}) + \tilde{O}(\varepsilon K \sqrt[4]{T})$ regret guarantee and in the stochastic case we achieve $O\left(\sum_{a:\Delta_a>0} \frac{K^3 \varepsilon^2}{\Delta_a}\right)$ regret guarantee. Our result demonstrates that just one extra observation per round is sufficient to circumvent the impossibility result of \citet{GL16} and significantly relaxes the assumptions made by \citet{CBS18} to achieve the same goal.

There are a number of open questions and interesting directions for future research. One is to improve the regret guarantee in the stochastic regime. Another is to extend the results to bandits with limited advice in the spirit of \citet{SCB13,Kal14}.

\subsubsection*{Acknowledgements}

The authors thank Julian Zimmert for valuable input and discussion during this project. We further thank Chlo\`e Rouyer for pointing out a mistake in the Proof of Theorem~\ref{thm:stochastic} which has been fixed in the present version.

\bibliography{bibliography}

\newpage
\appendix
% Theorem:
% lower bound proof

\section{Proof of Theorem~\ref{thm:lowerbound}}
\label{proof:lowerbound}
The lower bound is a straightforward adaptation of Theorem 2 in \cite{SBCA14}, which states that for prediction with limited advice where $M' = M +1$ of  $K$ experts are queried, we have for $T \geq \frac{3}{16}\frac{K}{M'}$:
\begin{align*}
  \inf \sup \mathcal{R}_{T} \geq 0.03 \sqrt{ \frac{K}{M' T}},
\end{align*}
where the infimum is over learning strategies and the supremum over oblivious adversaries.

Our case of $M=1$ additional expert corresponds to $M'=2$. The proof of the above is based upon the standard technique for lower bounding, where Bernoulli losses with varying biases are constructed. As this is a stochastic setting, the regret of playing a suboptimal arm $a$ is analysed as
\begin{align*}
  (\nu_{a} - \nu_{a^{\star}}) \E [N_{T}(a)],
\end{align*}
where the $\nu$'s are the biases of the Bernoulli variables and $N_{T}(a)$ is the number of times an arm is played. The rest of analysis consists of lower bounding the expected number of plays and tuning the biases.

By changing the constructed losses to Bernoulli variables times $\varepsilon$ (i.e. taking values in $\{0,\varepsilon\}$), the expected values become $\varepsilon \nu_{a}$, which means we get a factor of $\varepsilon$ in the above expression. Since the bound on $\E [N_{T}(a)]$ does not depend on the values taken by the distributions, but only the ability to discern them, the proof follows directly from that in \cite{SBCA14}.$\hfill\square$

% Theorem suppl.:
\section{Supplement for the proof of Theorem~\ref{thm:adversarial} (Section~\ref{proof:adversarial}) }

\subsection{Proof of Lemma~\ref{lemma:adversarial:main}}\label{proof:adversarial:main}

We first derive two inequalities, which are combined and rearranged into the statement of the lemma.

Consider the quantity
\begin{align*}
  \sum_{t=1}^{T} \frac{1}{\eta_{t}} \ln\E_{a\sim p_{t}} \left [ \exp \left ( -\eta_{t} \est - \eta_{t}^{2} \left(\est \right)^{2} \right ) \right ] 
  	&\leq \sum_{t=1}^{T} \frac{1}{\eta_{t}} \ln\E_{a\sim p_{t}} \left [ 1 - \eta_{t} \est \right ] \\
  	&= \sum_{t=1}^{T} \frac{1}{\eta_{t}} \ln \left(1 - \eta_{t} \E_{a\sim p_{t}} \left [ \est \right]\right) \\
	&\leq -\sum_{t=1}^{T}  \E_{a\sim p_{t}} \left [ \est \right],
\end{align*}
where the first step is based on the inequality $e^{z-z^{2}} \leq 1+z$ for $z = - \eta_{t}\est \geq -1/2$ \citep{CBMS07}. The upper bound on $\eta_{t} \leq (2\varepsilon(K-1))^{-1}$ guarantees that the condition of the inequality holds. The last step is based on $\ln(1+z) \leq z$ for $z > -1$. 

Using the potential \eqref{potentialDef} we can rewrite the same quantity as
\begin{align*}
  \frac{1}{\eta_{t}}\ln\E_{a\sim p_{t}} \left [ \exp \left ( -\eta_{t} \est - \eta_{t}^{2} \left(\est \right)^{2} \right ) \right ] 
  	&= \frac{1}{\eta_{t}}\ln \sum_{a=1}^{K} \exp \left ( -\eta_{t} \est - \eta_{t}^{2} \left(\est \right)^{2} \right ) \cdot p_{t}^{a}\\
  &= \frac{1}{\eta_{t}} \ln \frac{\sum_{a=1}^{K} \exp \left ( - \eta_t D_t (a) - \eta_t^2 S_t (a) \right )}{\sum_{a=1}^K \exp \left ( - \eta_{t} D_{t-1} (a) - \eta_{t}^2 		S_{t-1} (a) \right )}\\
  &= \Phi_{t}(\eta_{t}) - \Phi_{t-1}(\eta_{t}).
\end{align*}
Summing over $t$ and reindexing the sum we get
\begin{align*}
  \sum_{t=1}^{T} \left ( \Phi_{t}(\eta_{t}) - \Phi_{t-1}(\eta_{t}) \right ) 
  &= \sum_{t=1}^{T-1}  \left ( \Phi_{t}(\eta_{t}) - \Phi_{t}(\eta_{t+1}) \right ) + \Phi_{T}(\eta_{T}) - \Phi_{0}(\eta_{1}).
\end{align*}
Since by definition $D_{0} = 0$ and $S_{0} = 0$, we have $\Phi_{0}(\eta_{1}) = 0$. Next, we lower bound the middle term:
\begin{align*}
  \Phi_{T}(\eta_{T}) &=\frac{1}{\eta_{T}} \ln \left ( \frac{1}{K}  \sum_{a=1}^{K} \exp \left (- \eta_{T} D_{T}(a) - \eta_{T}^{2} S_{T}(a) \right) \right)\\
  %&= - \frac{\ln K}{\eta_{T}}  + \frac{1}{\eta_{T}}  \ln \left ( \sum_{a=1}^{K} \exp \left (- \eta_{T} D_{T}(a) - \eta_{T}^{2} S_{T}(a) \right) \right) \\
  &\geq - \frac{\ln K}{\eta_{T}}  + \frac{1}{\eta_{T}}  \ln \left ( \exp \left (- \eta_{T} D_{T}(a) - \eta_{T}^{2} S_{T}(a) \right) \right) \\
  &= - \frac{\ln K}{\eta_{T}}  - D_{T}(a) - \eta_{T} S_{T}(a),
\end{align*}
where we have used that the logarithm is monotonously increasing and all the terms in the inner sum are positive. 

By using the lower and upper bounds simultaneously and moving everything except for $-D_T (a)$ from the left hand side, the proof is complete. $\hfill \square$

\subsection{Proof of Lemma~\ref{lemma:adversarial:technical}}\label{proof:adversarial:technical}

By the boundedness we have:
\begin{align*}
  \sum_{t=1}^{T} \sigma_{t} \left ( \frac{1}{\sqrt{\sigma_{t-1} + c}} - \frac{1}{\sqrt{\sigma_{t} + c}} \right ) &\leq \sum_{t=1}^{T}  (\sigma_{t-1} + c) \left ( \frac{1}{\sqrt{\sigma_{t-1} + c}} - \frac{1}{\sqrt{\sigma_{t} + c}} \right ) \\
  &= \sum_{t=0}^{T-1} \frac{\sigma_{t} + c}{\sqrt{\sigma_{t} + c}}  - \sum_{t=1}^{T} \frac{\sigma_{t-1} + c}{\sqrt{\sigma_{t} + c}} \\
      &= \sum_{t=1}^{T-1} \frac{\sigma_{t} - \sigma_{t-1}}{\sqrt{\sigma_{t} + c}} + \frac{\sigma_{0} + c}{\sqrt{\sigma_{0} + c}} -  \frac{\sigma_{T-1} + c}{\sqrt{\sigma_{T} + c}}. 
\end{align*}
Here the second term is $\sqrt{c}$ and the third is negative and can thus be discarded in the upper bound. The first term is a lower Riemann sum of $x \mapsto 1/\sqrt{x + c}$, giving us:
\begin{align*}
  \sum_{t=1}^{T} \sigma_{t} \left ( \frac{1}{\sqrt{\sigma_{t-1} + c}} - \frac{1}{\sqrt{\sigma_{t} + c}} \right ) &\leq \sqrt{c} + \int_{\sigma_{1}}^{\sigma_{T-1}} \frac{1}{\sqrt{x+c}} \ dx \\
  &=  \sqrt{c} + 2 \sqrt{x + c}  \ \bigg \vert_{\sigma_{1}}^{\sigma_{T-1}} \\
  &\leq 2 \sqrt{\sigma_{T-1} + c},
\end{align*}
where the final inequality uses $2\sqrt{\sigma_{1} + c} > \sqrt{c}$. $\hfill \square$
% Theorem:
% stochastic proof sketch

\section{Proof of Theorem~\ref{thm:stochastic}} \label{proof:stochastic}
Before proving the theorem we need the following technical lemma:

\begin{lemma}
\label{stochastic:technical}
For $c > 0$ we have
\begin{align*}
   \sum_{t=1}^{\infty} e^{-c\sqrt{t}} \leq \frac{2}{c^{2}}, \quad \text{and} \quad
   \sum_{t=1}^{\infty} e^{-ct} \leq \frac{1}{c}.
\end{align*}
\end{lemma}

\begin{proof}
For the first part, note that
\begin{align*}
  \int e^{-c \sqrt{t}}dt = - \frac{2}{c}\sqrt{t} e^{-c\sqrt{t}} - \frac{2}{c^{2}} e^{-c\sqrt{t}},
\end{align*}
which is confirmed by differentiation. Then
\begin{align*}
  \sum_{t=1}^{\infty} e^{-c\sqrt{t}} \leq \int_{0}^{\infty} e^{-c\sqrt{t}} dt  =  \left. - \frac{2}{c}\sqrt{t} e^{-c\sqrt{t}} - \frac{2}{c^{2}} e^{-c\sqrt{t}} \right \vert_{0}^{\infty}  = \frac{2}{c^{2}},
\end{align*}
where we use that the summand is decreasing, making the series a lower Riemann sum of the intergral. 
For the second part we use the exact limit and that $e^x -1 \geq x$ with the same sign for all $x$:
\begin{align*}
  \sum_{t=1}^{\infty} e^{-ct}  = \frac{1}{e^c - 1} \leq \frac{1}{c}. &\qedhere
\end{align*}
\end{proof}

\paragraph{Proof of Theorem~\ref{thm:stochastic}}

Recall that the expected regret in the stochastic setting is given by
\begin{align*}
  \mathcal{R}_{T} = \sum_{a: \Delta_{a} >0} \Delta_{a} \E \left [ \sum_{t=1}^{T} \mathbbm{1} (A_{t} = a ) \right ],
\end{align*}
where we identify $\E [\mathbbm{1}(A_t = a) ] = \E [p_t^a]$. Since  $p_{1}^{a} = 1/K$ by definition, we need to bound
\begin{align*}
  \E \left [ \sum_{t=2}^{T} p_{t}^{a} \right ] = \E \left [ \sum_{t=2}^{T}  \E [ p_{t}^{a}] \right ]. % \leq \E \left [ \sum_{t=0}^{\infty}  \E_{i} [ p_{t}^{a}] \right ].
\end{align*}

Consider first the case where the learning rate is $\eta_{t} = \sqrt{\frac{\ln K}{\max_{a} S_{t-1}(a) + (K-1)^{2}} }$. We bound the individual probabilities as :
\begin{align}
  p_{t}^{a} &= \frac{\exp \left ( - \eta_t D_{t-1} (a) - \eta_t^2 S_{t-1} (a) \right )}{\sum_{a=1}^K \exp \left ( - \eta_t D_{t-1}(a) - \eta_t^2 S_{t-1} (a) \right )} \nonumber \\
		&= \frac{\exp \left ( - \eta_t (D_{t-1} (a) - D_{t-1}(a^{\star}) )- \eta_t^2 (S_{t-1} (a)  - S_{t-1}(a^{\star}) ) \right )}{\sum_{a=1}^K \exp \left ( - \eta_t (D_{t-1} (a) - D_{t-1}(a^{\star}) )- \eta_t^2 (S_{t-1} (a)  - S_{t-1}(a^{\star}) ) \right )} \nonumber\\
  		&\leq \exp \left ( - \eta_t (D_{t-1} (a) - D_{t-1}(a^{\star}) )- \eta_t^2 (S_{t-1} (a)  - S_{t-1}(a^{\star}) ) \right ) \nonumber\\
		&\leq \exp \left ( - \eta_t (D_{t-1} (a) - D_{t-1}(a^{\star})\right ) \exp \left ( \eta_t^2 S_{t-1}(a^{\star})  \right ) \nonumber\\
		&\leq K \exp \left ( -\eta_{t} \sum_{i=1}^{t-1} X_{i} \right), \label{boundOnpta}
\end{align}
where we have defined $\sum X_{i} = D_{t-1}^{a} - D_{t-1}^{a^{\star}}$ and used $ \eta_t^2 S_{t-1}(a^{\star})  \leq \ln K$.

Next we split up the expectation in two parts around a threshold $\sigma > 0$, using $p_{t}^{a} \leq 1$ and \eqref{boundOnpta}:
\begin{align}
  \E [ p_{t}^{a}] \leq  \sigma \PP \left \{p_t^a \leq \sigma \right \} + 1 \cdot \PP \left \{ p_{t}^{a} > \sigma \right \} \leq \sigma + \PP \left \{ K \exp \left ( - \eta_{t} \sum_{i=1}^{t-1} X_{i} \right ) > \sigma \right \}, \label{splitStocAdap}
\end{align}
%where the second inequality comes from the upper bound in \eqref{boundOnpta} and noting that $\eta_{t}^{2} (S_{t-1}^{a} - S_{t-1}^{a^{\star}} ) \in (-\ln K, \ln K)$. We have further .

Since $\eta_{t}$ is a random variable correlated with the $X_{i}$'s, we cannot directly bound this expression. We can however split the event under the probability into two separate cases, and upper bound the expression using upper and lower bounds on $\eta_t$ in the cases where $\sum X_{i}$ is negative or positive:
\begin{align*}
  \PP \left \{ K \exp \left ( - \eta_{t} \sum_{i=1}^{t-1} X_{i} \right ) > \sigma \right \} 
  &= \PP \left \{ K \exp \left ( - \eta_{t} \sum_{i=1}^{t-1} X_{i} \right ) > \sigma \ \& \ \sum_{i=1}^{t-1} X_{i} \leq 0\right \} \nonumber\\
  & \quad + \PP \left \{ K \exp \left ( - \eta_{t} \sum_{i=1}^{t-1} X_{i} \right ) > \sigma   \ \&\  \sum_{i=1}^{t-1} X_{i} >0 \right\} \\
  &\leq  \PP \left \{ K \exp \left ( - \bar{\eta}_{t} \sum_{i=1}^{t-1} X_{i} \right ) > \sigma \right \} \nonumber\\ 
  & \quad+  \PP \left \{ K \exp \left ( - \frac{ \sum_{i=1}^{t-1} X_{i} }{2(K-1)}\right ) > \sigma \right \},
\end{align*}
where we have introduced $\bar{\eta}_{t} := \sqrt{ \frac{\ln K}{(t-1)\varepsilon^{2} + 1} } \frac{1}{K-1}$, which is a lower bound on $\eta_t$.
Introducing $E = \sum_i \E_{B_i} [X_i] = (t-1)\Delta_{a}$ and the shorthand $V = \sum_{i=1}^{t-1} X_{i} - E$, we can rewrite the probabilities, resulting in
\begin{align*}
  \E [p_{t}^{a} ] \leq \sigma &+ \PP \left \{ V < - \frac{\ln (\sigma/K)}{\bar{\eta}_{t}} -E \right \} + \PP \left \{ V < - 2(K-1)\ln (\sigma/K) -E \right \}.
\end{align*}
Since $V$ is the sum of martingale difference sequences we want to use Azuma's inequality, which requires that the right hand sides are negative. Choosing a positive splitting point $\sigma$ as
\begin{align}
  \sigma = K \exp \left ( - \frac{(t-1)\Delta_{a}}{2(K-1)} \sqrt{ \frac{\ln K}{(t-1)\varepsilon^{2} + 1}} \right ), \label{sigma}
\end{align}
the two right hand sides become
\begin{align}
  - \frac{\ln (\sigma/K)}{\bar{\eta}_{t}} -E &= - \frac{E}{2}, \label{rhs1}\\
  - 2(K-1)\ln (\sigma/K) -E &= E \left ( \sqrt { \frac{\ln K}{(t-1)\varepsilon^{2} + 1}} - 1 \right ) \leq - \frac{E}{2}, \label{rhs2}
\end{align}
using$\sqrt { \frac{\ln K}{(t-1)\varepsilon^{2} + 1}} = (K-1) \bar{\eta}_{t} \leq 1/2$ for the final inequality. As these are negative, we can use Azuma's inequality which since the range of the $X_i$'s is $2(K-1)\varepsilon$ gives us
\begin{align}
   \E [p_{t}^{a} ]  \leq K \exp \left ( - \frac{(t-1)\Delta_{a}}{2(K-1)} \sqrt{ \frac{\ln K}{(t-1)\varepsilon^{2} + 1}} \right ) + 2 \exp \left ( - \frac{E^{2}/4}{2(t-1)(K-1)^{2}\varepsilon^{2}} \right ), \label{postAzuma}
   %&= K \exp \left ( - \frac{(t-1)\Delta_{a}}{2} \sqrt{ \frac{\ln K}{(t-1)\varepsilon^{2} + 1}} \right )  + 2 \exp \left ( - \frac{1}{8} \frac{\Delta_{a}^{2}}{\varepsilon^{2}} (t-1) \right )\\
   %\leq K \exp \left ( - \frac{1}{2} \sqrt{\frac{\ln K}{2}} \frac{\Delta_{a}}{\varepsilon} \sqrt{t-1} \right ) + 2 \exp \left ( - \frac{1}{8} \frac{\Delta_{a}^{2}}{\varepsilon^{2}} (t-1) \right ).
\end{align}
where the inequality comes from substitution of \eqref{sigma}, \eqref{rhs1} and \eqref{rhs2}, and the two probabilities becomes one expression using the final inequality of \eqref{rhs2}. 

We now consider two cases of the first term in \eqref{postAzuma}. If $(t-1)\varepsilon^{2} \geq 1$, then 
\begin{align*}
   \exp \left ( - \frac{(t-1)\Delta_{a}}{2(K-1)} \sqrt{ \frac{\ln K}{(t-1)\varepsilon^{2} + 1}} \right ) \leq \exp \left ( - \frac{1}{2(K-1)} \sqrt{\frac{\ln K}{2}} \frac{\Delta_{a}}{\varepsilon} \sqrt{t-1} \right ).
\end{align*}
If instead $(t-1)\varepsilon^{2} \leq 1$, then
\begin{align*}
   \exp \left ( - \frac{(t-1)\Delta_{a}}{2(K-1)} \sqrt{ \frac{\ln K}{(t-1)\varepsilon^{2} + 1}} \right ) \leq \exp \left (  - \frac{\Delta_{a}}{2(K-1)} \sqrt{\frac{\ln K}{2}} t  \right ).
\end{align*}
For both cases the second term in \eqref{postAzuma} becomes
\begin{align*}
   2 \exp \left ( - \frac{1}{8} \frac{\Delta_{a}^{2}}{\varepsilon^{2}} \frac{t-1}{(K-1)^2} \right ).
\end{align*}

%\begin{align*}
%  - \frac{(t-1)\Delta_{a}}{\sqrt{(t-1)\varepsilon^{2}+1}} \leq - \frac{\Delta_{a} \sqrt{t-1}}{\varepsilon \sqrt{2}}.
%\end{align*}

For $\eta_{t} = \frac{1}{2(K-1)}$, we first note that $\eta_{t} \leq \sqrt{\frac{\ln K}{\max_{a} S_{t-1}(a) + (K-1)^{2}} }$, so the bound used for $p_t
^a$ in \eqref{splitStocAdap} still applies. Since $\eta_{t}$ is no longer a random variable, we have
\begin{align*}
  \E [p_{t}^{a}] \leq \sigma + \PP \left \{ K \exp \left (- \frac{\sum X_{i}}{2(K-1)} \right ) > \sigma \right \}.
\end{align*}
Rewriting this as before and choosing $\sigma = K \exp \left ( - \frac{(t-1)\Delta_{a}}{4(K-1)} \right )$, we get by Azuma's inequality
\begin{align*}
   \E [p_{t}^{a}] \leq  K \exp \left ( - \frac{(t-1) \Delta_{a}}{4(K-1)} \right ) + \exp \left ( - \frac{1}{8} \frac{\Delta_{a}^{2}}{\varepsilon^{2}} \frac{t-1}{K-1} \right ).
\end{align*}

We now have three cases of bounds on $\E \left [p_{t}^{a} \right]$. For each of these the analysis is completed by summing  over $t=2$ to $\infty$, using Lemma \ref{stochastic:technical} and then summing over the arms times the gaps. For all cases, the result is smaller than the right hand side in Theorem~\ref{thm:stochastic}.$\hfill \square$

% supplements:
%\input{appendix}

% stochastic proofs:
%\section{Proofs of Theorems \ref{stochastic:thm} and \ref{stocAdaptive:thm}}
%\input{stochastic}
%\input{stocAdaptive}

\end{document}